\documentclass[a4paper,12pt]{article}

\usepackage{amscd}
\usepackage{amsfonts}
\usepackage{amsmath}
\usepackage{amssymb}
\usepackage{amsthm}
\usepackage[T1]{fontenc} % changing encode
\usepackage{here} % [h] for tables
\usepackage{mathrsfs} % \mathscr
\usepackage{txfonts} % colneqq
\usepackage[all]{xy} % xypic
\usepackage{algorithm} % pseudocode
\usepackage{algpseudocode} %pseudocode

\allowdisplaybreaks

\theoremstyle{plain}
\newtheorem{thm}{Theorem}[section]

\newtheorem{prp}[thm]{Proposition}

\theoremstyle{definition}

\newcommand{\vs}[1][0.2]{\vspace{#1in}\noindent\ignorespaces}
\newcommand{\ba}{\begin{array*}}
\newcommand{\ea}{\end{array*}}
\newcommand{\be}{\begin{eqnarray*}}
\newcommand{\ee}{\end{eqnarray*}}
\newcommand{\bi}{\begin{itemize}}
\newcommand{\ei}{\end{itemize}}
\newcommand{\bb}{\vs\begin{itembox}}
\newcommand{\eb}{\end{itembox}}
\newcommand{\bc}{\begin{center}}
\newcommand{\ec}{\end{center}}
\newcommand{\bs}{\vs\begin{screen}}
\newcommand{\es}{\end{screen}}

\def\ens#1{{\mathchoice{\left\{ #1 \right\}}{\{ #1 \}}{\{ #1 \}}{\{ #1 \}}}}
\def\set#1#2{{\mathchoice{\left\{ #1 \middle| #2 \right\}}{\{ #1 \mid #2 \}}{\{ #1 \mid #2 \}}{\{ #1 \mid #2 \}}}}
\def\r#1{\text{\rm #1}}

\def\Bigv#1{\left| #1 \right|}
\def\v#1{{\mathchoice{\Bigv{#1}}{| #1 |}{| #1 |}{| #1 |}}}
\def\Bign#1{\left\| #1 \right\|}
\def\n#1{{\mathchoice{\Bign{#1}}{\| #1 \|}{\| #1 \|}{\| #1 \|}}}

\newcommand{\bG}{\mathbb{G}}

\newcommand{\bN}{\mathbb{N}}

\newcommand{\bR}{\mathbb{R}}

\newcommand{\bZ}{\mathbb{Z}}

\newcommand{\cO}{\mathscr{O}}

\newcommand{\rC}{\r{C}}

\newcommand{\rH}{\r{H}}

\newcommand{\N}{\bN}

\newcommand{\R}{\bR}
\newcommand{\Z}{\bZ}

\newcommand{\Gm}{\mathbb{G}_{\r{m}}}
\newcommand{\Qp}{\mathbb{Q}_p}

\newcommand{\Zp}{\mathbb{Z}_p}

\newcommand{\im}{\r{im}}

\newcommand{\CO}{\r{CO}}
\newcommand{\hGm}{\hat{\bG}_{\r{m}}}

\algnewcommand\algorithmicbreak{{\bf break}}
\algnewcommand\Break{\algorithmicbreak{}}
\algnewcommand\algorithmiccontinue{{\bf continue}}
\algnewcommand\Continue{\algorithmiccontinue{}}

\title{$p$-adic Manifold Learning and Benchmark Tasks from Impartial Games}
\author{Tomoki Mihara}
\date{}

\begin{document}

\maketitle
%\address
\begin{abstract}
We introduce $p$-adic manifold learning, propose an algorithm to solve it, and propose benchmark tasks from impartial games.
\end{abstract}

\tableofcontents
%\fn{}{}

\section{Introduction}
\label{Introduction}

Given an impartial game $G$ whose positions are labeled by a vector of natural numbers, how its Grundy number behaves as a function? Of course, the answer heavily depends on $G$ and the labeling. However, it is excellent if there is a way to estimate the function from small sample data and an appropriate labeling.

\vs
Nim, the simplest impartial game, has been completely analysed in the sense that its Grundy number is known to be computed by the nimber sum. Since the nimber sum is $2$-adically continuous, it is reasonable to expect that the set of winning positions can be interpolated from the data of winning positions with small piles through some $2$-adic optimisation method. However, what is $2$-adic optimisation?

\vs
Let $p$ be a prime number. The history of $p$-adic optimisation is much shorter than that of real optimisation. S.\ Albeverio, A.\ Khrennikov, and B.\ Tirrozi studied $p$-adic neural network in \cite{AKT99} and \cite{KT00}, P.\ E.\ Bradley studied dendrograms and clusterings using $p$-adic numbers in \cite{Bra08} and \cite{Bra09}, and so on. Introduction of \cite{Bra25} explains in detail the history around applications of the $p$-adic numbers. Of course, even in these years, there are still many investigations in $p$-adic optimisation (cf.\ \cite{ZZ23}, \cite{ZZB24}, \cite{BMP25}, \cite{Zub25-1}, \cite{Zub25-2}, \cite{Ngu25}, \cite{Mih26-1}, \cite{Mih26-2}, \cite{Mih26-3}, \cite{Mih26-4}).

\vs
See Introduction of \cite{Mih26-1} for the similarity and the difference of real optimisation and $p$-adic optimisation. For the reader's convenience, we give a duplicated explanation here. There are several real optimisation methods which can be easily modified to be $p$-adic optimisation methods. For example, linear algebraic methods independent of the coefficient field works for a general field, and hence for the $p$-adic number field. Combinatorial methods such as greedy algorithm, hill climbing, and simulated annealing also work for $p$-adic optimisation, as long as we modify the distance from the $p$-adic distance to another similar distance such as a variant of Hamming distance on $p$-adic expansions. We note that since the non-Archimedean property prevents reaching a far point by repetition of $p$-adically small steps, optimisation based on repetition of small steps sometimes requires to use another distance than the $p$-adic distance.

\vs
On the other hand, the lack of $p$-adic counterparts of real methods based on gradients prevents from inventing $p$-adic counterparts of real optimisation methods. Although Newton's method also works for multivariable polynomials in $\Qp$, it is not applicable to an optimisation problem of a general $p$-adic function, which does not necessarily have a zero.

\vs
Another problem is that when we deal with a loss function $\epsilon$ of the form $X \to [0,\infty)$ for a subset $X$ of $\Qp$, the differential of $\epsilon$ does not naturally make sense. In order to differentiate a function $f$, we need to assume that the domain of $f$ and the codomain of $f$ admit ambient spaces with a compatible arithmetic structure. In the setting of $\epsilon$, the ambient space of $X$ is $\Qp$, while that of $[0,\infty)$ is $\R$. Since we do not have a definition of arithmetic of a pair of a $p$-adic number and a real number, the usual formulation fails.

\vs
In addition, even if we extend the notion of a differential so that it vanishes at any point where $\epsilon$ is locally constant, it frequently occurs that $\epsilon$ is locally constant at almost all points of the domain of $\epsilon$. Therefore, it is very difficult to make use of a differential to determine a small step in optimisation.

\vs
We go back to the original question. How can we estimate the set of winning positions from small sample values, under an assumption that the Grundy number defines a $2$-adically continuous function? Generalising this question, we consider the following problem: Let $X$ be a topological space, $Y$ a subset of $X$, $R$ a topological ring, $f \colon X \to R$ a continuous map with $Y = f^{-1}(0)$, and $\vec{y}$ finite sample data of elements of $Y$. Estimate $Y$ from the data of $(X,R,\vec{y})$.

\vs
In the nim setting, $X$ is $\Z_2^D$, where $D$ is the fixed number of heaps, $Y$ is the closure of the set of winning positions regarded as elements of $\N^D$ through the natural labeling, $R$ is $\Z_2$, $f$ is the unique continuous extension of the function $\N^D \to \N$ give by the Grundy number, and $\vec{y}$ is finite sample data of losing positions.

\vs
On the other hand, if $X$ is a Euclidean space, $Y$ is an algebraic subvariety of $X$, $R$ is $\R$, and $f$ is a polynomial function with coefficient in $\R$, then the problem is a part of what is called {\it manifold learning}. In this sense, we are considering a variant of manifold learning. Especially when $X$ and $R$ are $p$-adic objects, the problem should be treated as a $p$-adic counterpart of manifold learning.

\vs
We note that manifold learning mainly aims at dimensionality reduction for a practical reason. Even when we do not consider dimensionality reduction, we usually consider the case that small noise occurs in the observation of $\vec{y}$. Otherwise, $f$ is decisively estimated by Gauss' elimination applied to Vandermonde matrix.

\vs
On the other hand, we do not consider the dimensionality reduction in $p$-adic manifold learning, because Brouwer's theorem implies that $\Zp^D$ is homeomorphic to $\Zp$ for any $D \in \N_{> 0}$. In addition, we do not assume that $f$ is given as a polynomial function. Therefore, the algebraic approach such as the decisive polynomial regression by Gauss' elimination applied to Vandermonde matrix does not work here.

\vs
Here is a summary of our strategy for $p$-adic manifold learning for the case $X = \Zp^D$ with $D \in \N_{> 0}$, $Y$ is a closed subspace of $X$, and $R = \Zp$.
\bi
\item[(1)] Estimate the defining function $f$ of $Y$ modulo $p^E$ for a fixed hyper-parameter $E \in \N$ by solving nearest neighbour searching problems using a $p$-adic counterpart of kd-tree applied to the image of $\vec{y}$. We note that $f$ is not necessarily unique, and hence the estimation does not necessarily approximate the true $f$.
\item[(2)] Apply finite rank approximation to the estimation of $f$ modulo $p^E$ by higher dimensional Mahler expansion in order to avoid overfitting the sparse sample data $\vec{y}$. Then $Y$ is estimated as the zero locus of the resulting function.
\ei
We note that we extend this method to the case where $X$ is a profinite Abelian group through the analogy between Fourier series expansion and Mahler expansion. Although our purpose is to apply this algorithm to estimate an unknown $Y \subset X$ such as the set of winning positions of a complicated impartial game, we use $3$-heap nim in order to provide a benchmark for $p$-adic manifold learning. We note that for a general impartial game with a general labeling, we cannot expect that the Grundy number gives a $2$-adically continuous function. Therefore, we practically need to construct an appropriate labeling for which the resulting function or another specific function characterising winning positions is $p$-adically complete for some explicit prime number $p$.

\vs
We briefly explain contents of this paper. In \S \ref{Convention and Preliminaries}, we briefly recall $p$-adic analysis. In \S \ref{Zero Detection}, we recall a zero set, introduce a $\CO_{\delta}$ set as a $p$-adic counterpart of a zero set, and explain how to estimate a $\CO_{\delta}$ set from sample data. In \S \ref{p-adic Estimation of Zero Locus}, we explain the algorithm to solve $p$-adic manifold learning for $X = \Zp^D$ for a $D \in \N$, and exhibit its performance against the benchmark task given by $3$-heap nim. In \S \ref{Profinite Abelian Group Extension}, we explain the analogy between Fourier series expansion and Mahler expansion, and extend the algorithm to the case where $X$ is a profinite Abelian group.

\section{Convention and Preliminaries}
\label{Convention and Preliminaries}

We denote by $\N$ the set of non-negative integers. For a $d \in \Z$, we set $\N_{> d} \coloneqq \N \cap (d,\infty)$, and $\N_{< d} \coloneqq \N \cap [0,d)$. For sets $X$ and $Y$, we denote by $T^S$ the set of maps $S \to T$. We note that every $d \in \N$ is identified with $\N_{< d}$ in set theory, and hence $T^d$ formally means $T^{\N_{< d}}$, which is naturally identified with the set of $d$-tuples in $T$.

\vs
When we write a pseudocode, a for-loop along a subset of $\N$ denotes the loop of the ascending order, and a for-loop along $I$ denotes a loop in an arbitrary order.

\vs
We recall basic notions in $p$-adic number theory. Throughout the paper, $\sup$ denotes the supremum in $[0, \infty]$. In particular, the supremum of the empty set is $0$.

\vs
A {\it Banach $\Qp$-vector space} is a $\Qp$-vector space $V$ equipped with a map $\n{\cdot} \colon V \to [0,\infty)$ called {\it the norm} satisfying the following:
\begin{itemize}
\item[(i)] The map $V \times V \to [0,\infty), \ (v_0,v_1) \mapsto \n{v_0 - v_1}$ is a complete ultrametric.
\item[(ii)] For any $(c,v) \in \Qp \times V$, $\n{cv} = \v{c} \ \n{v}$ holds.
\end{itemize}
For example, for any set $I$, the set $\rC_0(I,\Qp)$ of sequences $(c_i)_{i \in I} \in \Qp^I$ in $\Qp$ indexed by $I$ converging to $0$ forms a Banach $\Qp$-vector space with respect to the supremum norm. Here, a sequence $(c_i)_{i \in I} \in \Qp^I$ indexed by a set $I$ is said to {\it converge to $0$} if $\# \set{i \in I}{\v{c_i} \geq \epsilon} < \infty$ for any $\epsilon \in (0,\infty)$.

\vs
An {\it orthonormal Schauder basis} of a Banach $\Qp$-vector space $V$ is a sequence $(F_i)_{i \in I}$ of elements of $V$ indexed by a set $I$ satisfying the following for any $f \in V$:
\bi
\item[(1)] There uniquely exists a $(c_i)_{i \in I} \in \rC_0(I,V)$ such that
\be
\sum_{i \in I} c_i F_i \coloneqq \lim_{I'} \sum_{i \in I'} c_i F_i
\ee
converges to $f$, where $I'$ runs through all finite subsets of $I$ and the limit is taken along the directed partial order given by the inclusion.
\item[(2)] The unique $(c_i)_{i \in I}$ in (1) satisfies $\n{f} = \sup_{i \in I} \v{c_i}$.
\ei
For example, the sequence $(\delta_i)_{i \in I}$ of delta functions
\be
\delta_i \colon I & \mapsto & \Qp \\
j & \mapsto &
\left\{
\begin{array}{ll}
1 & (j = i) \\
0 & (j \neq i)
\end{array}
\right.
\ee
forms an orthonormal Schauder basis of $\rC_0(I,\Qp)$ for any set $I$.

\vs
Let $X$ be a compact topological space. Then the set $\rC(X,\Qp)$ of continuous functions $X \to \Qp$ forms a Banach $\Qp$-vector space with respect to the supremum norm, because the compactness of $X$ implies that every continuous function $X \to \Qp$ is bounded. We are mainly interested in the case where $X$ is ultrametrisable, i.e.\ there is an ultrametric on $X$ compatible with the topology of $X$.

\vs
An {\it ultrametric} on $X$ is a map $d \colon X^2 \to [0,\infty)$ satisfying the following:
\bi
\item[(1)] For any $(x,y) \in X^2$, $d(x,y) = 0$ is equivalent to $x = y$.
\item[(2)] For any $(x,y) \in X^2$, the equality $d(x,y) = d(y,x)$ holds.
\item[(3)] For any $(x,y,z) \in X^3$, the inequality $d(x,z) \leq \max \ens{d(x,y),d(y,z)}$ holds.
\ei
In particular, the condition (3) implies the triangular inequality, and hence every ultrametric is a metric. A metric $d$ on $X$ is said to be {\it compatible with the topology of $X$} if the metric topology associated to $d$ coincides with the topology of $X$.

\vs
If $X$ is totally disconnected and Hausdorff, then $X$ admits an orthonormal Schauder basis consisting of characteristic functions of clopen subsets by \cite{BM24} Proposition 7.1. In addition, if $X$ is ultrametrisable, then such an orthonormal Schauder basis can be taken in an explicit way using a compatible ultrametric, and forms a generalised van der Put basis by \cite{BM24} Proposition 7.5 (i) and Proposition 7.7. Here, a generalised van der Put basis is a generalisation of van der Put basis, which is an orthonormal Schauder basis of $\rC(\Zp,\Qp)$ (cf.\ \cite{Mah73} Chapter 16 \S 9 Theorem 4). Since we do not focus on a generalised van der Put basis in this paper, we omit the detail.

\vs
Instead, we focus on another orthonormal Schauder basis called {\it Mahler basis} (cf.\ \cite{Mah58} Theorem 1) . It is an orthonormal Schauder basis of $\rC(\Zp,\Qp)$ given as $(\binom{x}{\ell})_{\ell=0}^{\infty}$, where $\binom{x}{\ell}$ denotes the $\ell$-th binomial coefficient function for an $\ell \in \N$.
\be
\binom{x}{0} & = & 1 \\
\binom{x}{1} & = & x \\
\binom{x}{2} & = & \frac{x(x-1)}{2} \\
\binom{x}{3} & = & \frac{x(x-1)(x-2)}{6}
\ee
Mahler expansion $(c_{\ell})_{\ell=0}^{\infty} \in \rC_0(\N,\Qp)$ of an $f \in \rC(\Zp,\Qp)$, i.e.\ the coefficient of the expression of $f$ with respect to Mahler basis, is given by the explicit formula
\be
c_i = \sum_{j=0}^{i} (-1)^{i-j} \binom{i}{j} f(j),
\ee
and can be computed in many ways, e.g.\ higher order difference operators, Gauss' elimination, inclusion-exclusion principle or M\"ovius inversion for
a $p$-adic functional on a power set, and so on.

\vs
Let $D \in \N$. The $D$-dimensional dimensional Mahler basis is an orthonormal Schauder basis of $\rC(\Zp^D,\Qp)$ given as $(\prod_{d=0}^{D-1} \binom{x_d}{\ell_d})_{(\ell_d)_{d=0}^{D-1} \in \N^D}$ (cf.\ \cite{Mah80} \S 12 for the $2$-dimensional case). We note that the higher dimensional case is immediately reduced to the $1$-dimensional case by $p$-adic monoidal Gel'fand Naimark duality (cf.\ \cite{Mih21} Corollary 2.5). Namely, for any $(d_0,d_1) \in \N^2$, the multiplication
\be
\rC(\Zp^{d_0},\Qp) \times \rC(\Zp^{d_1},\Qp) \to \rC(\Zp^{d_0+d_1},\Qp)
\ee
induces an isometric $\Qp$-algebra isomorphism
\be
\rC(\Zp^{d_0},\Qp) \hat{\otimes}_{\Qp} \rC(\Zp^{d_1},\Qp) \to \rC(\Zp^{d_0+d_1},\Qp).
\ee
In particular, the higher dimensional Mahler expansion is computed as the repetition of the $1$-dimensional Mahler expansion. Here is a pseudocode of the higher dimensional Mahler expansion based on in-place recursion of the difference operator:

\begin{figure}[H]
\begin{algorithm}[H]
\caption{$D$-dimensional Mahler expansion of a continuous function $F \colon \Zp^D \to \Zp$ truncated at the $L$-th entry modulo $p^E$ computed from the multi-dimensional array $\vec{f} = (\vec{f}_x)_{x=0}^{L-1}$ of values of $F$ modulo $p^E$ on $\N_{< L}^D$}
\label{Mahler}
\begin{algorithmic}[1]
\Function {Mahler}{$p,E,D,L,\vec{f}$}
	\If {$D = 0$}
		\State \Return $\vec{f}$
	\EndIf
	\ForAll {$k \in \N_{< L}$}
		\ForAll {$x \in \N \cap (k,L)$ in the reverse order}
			\State $\vec{f}_x \gets$ \Call{Mahler}{$p,E,D-1,L,(\vec{f}_x - \vec{f}_{x-1}) \bmod p^E$}
		\EndFor
	\EndFor
	\State \Return $\vec{f}$
\EndFunction
\end{algorithmic}
\end{algorithm}
\end{figure}

In order to compute $F(\vec{x})$ modulo $p^E$ for a vector $\vec{x}$ of natural numbers with a fixed upperbound $M$ of entries, it suffices to compute all binomial coefficients modulo $p^E$ with arguments in $\N_{\leq M}$ using Pascal's triangle as a preprocess, and naively apply the definition of Mahler expansion modulo $p^E$.

\section{Zero Detection}
\label{Zero Detection}

Let $X$ be a topological space, and $Y$ a subset of $X$. We consider the following problem: Given sample points from $Y$, determine $Y$, or determine whether a given point $x \in X$ belongs to $Y$ or not.

\vs
We call this problem the {\it subset membership detection problem} for $(X,Y)$. Of course, a subset membership detection problem cannot be solved unless we additionally assume strong conditions on $Y$ and sample points.

\subsection{Zero Set}
\label{Zero Set}

In order to explain conditions, we first recall real analysis. The subset $Y$ is said to be a {\it zero set} of $X$ if there exists a continuous function $f \colon X \to \R$ such that $Y = f^{-1}(0)$, and is said to be a {\it $G_{\delta}$ set} of $X$ if it is a countable intersection of open subsets of $X$. Since $\ens{0} \subset \R$ is a closed $G_{\delta}$ set, every zero set is a closed $G_{\delta}$ set. When $X$ is metrisable, every closed subset is a zero set, because the distance from the closed subset gives a defining function.

\vs
It is notable that a finite intersection of zero sets is again a zero set, because for any finite set $S$ of continuous functions $f \colon X \to \R$, an $x \in X$ is a common zero of $S$ if and only if $x$ is a zero of $\sum_{f \in  S} f^2$. Therefore, a typical example of a zero set is the subvariety of the Euclidean space defined by finitely many real differentiable functions.

\vs
Suppose $Y$ is a zero set of $X$. Given sample points in $Y$, estimation of a continuous function defining $Y$ from the data of the sample points gives a solution of the subset membership detection problem for $(X,Y)$. Especially when $X$ is an Euclidean space and $Y$ is a subvariety, the problem to estimate a defining function of $Y$ is a part of what is called {\it manifold learning}.

\subsection{$\CO_{\delta}$ Set}
\label{CO_delta Set}

In $p$-adic analysis, we are mainly interested in $p$-adic functions rather than real functions. Therefore we consider a $p$-adic counterpart of the notion of a zero set. We say that $Y$ is a {\it $\CO_{\delta}$ set} of $X$ if it is a countable intersection of clopen subsets of $X$. It is elementary to show that $Y$ is a $\CO_{\delta}$ set of $X$ if and only if there exists a continuous function $f \colon X \to \Zp$ such that $Y = f^{-1}(0)$ (cf.\ proof of \cite{BM23} Lemma 2.9), and hence it is a $p$-adic counterpart of a zero set. We recall a relation between an ultrametric space and a $\CO_{\delta}$ set similar to that between a metric space and a zero set.

\begin{prp}
\label{closed vs CO_delta}
Suppose that $X$ is ultrametrisable. Then the following are equivalent:
\bi
\item[(1)] The subset $Y$ is closed.
\item[(2)] The subset $Y$ is a $\CO_{\delta}$ set of $X$.
\ei
\end{prp}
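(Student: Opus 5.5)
The direction (2) $\Rightarrow$ (1) needs no hypothesis on $X$ and I would do it first. A $\CO_{\delta}$ set is an intersection of clopen subsets. Each clopen subset is in particular closed, and an arbitrary intersection of closed subsets is closed, so $Y$ is closed.

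For (1) $\Rightarrow$ (2), fix an ultrametric $d$ on $X$ compatible with the topology, and assume $Y$ is closed. If $Y = \emptyset$, then $Y$ is the intersection of the single clopen subset $\emptyset$, so we may assume $Y \neq \emptyset$. For each $n \in \N$, I would set
\be
U_n \coloneqq \bigcup_{y \in Y} B(y,2^{-n}), \qquad B(y,r) \coloneqq \set{x \in X}{d(x,y) < r},
\ee
and show that $U_n$ is clopen and that $Y = \bigcap_{n \in \N} U_n$.

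The key step is that open balls in an ultrametric space are clopen. Each $B(y,r)$ is open by definition, so $U_n$ is open. To see that $U_n$ is closed, take $x \notin U_n$. Then $B(x,2^{-n}) \cap U_n = \emptyset$: if some $z$ satisfied $d(x,z) < 2^{-n}$ and $d(z,y) < 2^{-n}$ for some $y \in Y$, condition (3) of an ultrametric would give $d(x,y) < 2^{-n}$, so $x \in U_n$, a contradiction. Hence the complement of $U_n$ is open, and $U_n$ is clopen. I expect this to be the only point needing care. In a general metric space, the neighbourhood $U_n$ is open but not necessarily closed, which is exactly why ultrametrisability is assumed.

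Finally, $Y \subset U_n$ holds for every $n$. Conversely, if $x \in \bigcap_n U_n$, then for every $n$ there is a $y_n \in Y$ with $d(x,y_n) < 2^{-n}$. Thus $y_n \to x$, and since $Y$ is closed, $x \in Y$. Therefore $Y = \bigcap_{n \in \N} U_n$ is a countable intersection of clopen subsets, i.e.\ a $\CO_{\delta}$ set of $X$.
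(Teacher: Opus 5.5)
Your proof is correct and takes the same approach as the paper. In both, the closed set $Y$ is written as the intersection of the neighbourhoods $\bigcup_{y \in Y} B(y,r)$ for radii $r \to 0$, and the ultrametric inequality shows these neighbourhoods are clopen; the paper uses closed balls and the fact that balls of a fixed radius partition $X$, while you use open balls and show directly that the complement of $U_n$ is open (your separate treatment of $Y = \emptyset$ is harmless but unnecessary, since the construction already gives $U_n = \emptyset$).
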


\begin{proof}
It suffices to show that (1) implies (2). Suppose $Y$ is closed. Take an ultrametric $d$ on $X$ compatible with its topology. Take an arbitrary sequence $(r_n)_{n \in \N}$ of positive real numbers converging to $0$, e.g.\ $((1+n)^{-1})_{n=0}^{\infty}$ or $(p^{-n})_{n=0}^{\infty}$. For each $r \in (0,\infty)$, we set
\be
B(Y,r) \coloneqq \bigcup_{y \in Y} \set{x \in X}{d(x,y) \leq r_n}.
\ee
Then we have
\be
Y = \bigcap_{n \in \N_{> 0}} B(Y,r_n)
\ee
by the closedness of $Y$. Therefore, it suffices to show that the union of closed balls of a common radius with respect to $d$ is clopen.

\vs
Let $r \in (0,\infty)$. We denote by $C$ the set of the union of closed balls of radius $r$ with respect to $d$. Since $d$ is an ultrametric, every closed ball with respect to $d$ is open, and hence every element of $C$ is open. Let $U \in C$. We show that $U$ is clopen. As we have shown, $U$ is open. Since $d$ is an ultrametric, the set of closed balls of radius $r$ with respect to $d$ is a disjoint covering of $X$. Therefore, $X \setminus U$ again belongs to $C$, and hence is open. This implies that $U$ is clopen.
\end{proof}

By the proof of Proposition \ref{closed vs CO_delta}, if $X$ is ultrametrisable by an ultrametric $d$ and $Y$ is a $\CO_{\delta}$ set of $X$, the function
\be
F_Y \colon X & \to & \Zp \\
x & \mapsto & 
\left\{
\begin{array}{ll}
0 & (x \in Y) \\
p^{\max \set{n \in \N}{x \in B(Y,p^{-n})}} & (x \in B(Y,1) \setminus Y) \\
1 & (x \in X \setminus B(Y,1))
\end{array}
\right.
\ee
is a continuous function defining $Y$. Using this specific defining function $F_Y$, the subset membership detection problem for $(X,Y)$ is reduced to estimation of $B(Y,p^{-n})$ for each $n \in \N$.

\vs
Especially when $X$ is a $p$-adic analytic space in some sense, e.g.\ the set of $\Qp$-rational points of a rigid analytic space over $\Qp$ (cf.\ \cite{Ber90}, \cite{Ber93}, \cite{BGR84}, \cite{Hub96}, and so on for various formulation of a rigid analytic space), and $Y$ is a closed subspace of $X$, then it is reasonable to call this problem the {\it $p$-adic manifold learning} for $(X,Y)$. We casually extend the reasonable terminology to a wider case: $X$ is a topological space and $Y$ is its $\CO_{\delta}$ set in order not to care about the difference of formulation of a $p$-adic analytic space. In order to solve the $p$-adic manifold learning for $(X,Y)$, it suffices to estimate $B(Y,p^{-n})$ for each $n \in \N$.

\vs
We mainly consider the case $X = \Zp^D$ for a $D \in \N$. Since $\Zp^D$ is ultrametrisable by the ultrametric associated to the $p$-adic $\ell^{\infty}$-norm $\n{\cdot}_{\infty}$, the $p$-adic manifold learning for $(\Zp^D,Y)$ makes sense. In order to estimate the function $F_Y$ for this case, we need to consider a data structure to handle a set of vectors of $p$-adic integers admitting a rapid method to answer nearest neighbour searching queries. We will introduce a $p$-adic analogue of kd-tree algorithm for this purpose in the next subsection.

\subsection{$p$-adic kd-trie}
\label{p-adic kd-trie}

Trie tree algorithm is one of the most effective data structures to handle a set of $p$-adic integers. Since $\Zp^D$ is homeomorphic to $\Zp$ for any $D \in \N _{> 0}$ by Brouwer's theorem (the characterisation theorem of the Cantor set originated from the study in \cite{Bro10}), trie tree algorithm is applicable also to a set of vectors of $p$-adic integers as long as we fix a homeomorphism, which can be explicitly chosen through the proof of Brouwer's theorem as long as we fix a system of clopen subsets.

\vs
We recall A.\ P.\ Zubarev's homeomorphism in \cite{Zub25-1}. For a $p$-adic integer $n$ and an $e \in \N$, we denote by $n[e] \in \N \cap [0,p)$ the $(1+e)$-th digit of the $p$-adic expansion of $n$. We have
\be
n = \sum_{e=0}^{\infty} n[e] p^e
\ee
for any $n \in \Zp$. For a $D \in \N_{> 0}$, the homeomorphism is given as
\be
\Zp^D & \to & \Zp \\
(n_d)_{d=0}^{D-1} & \mapsto & \sum_{d=0}^{D-1} \sum_{e=0}^{\infty} n_d[e] p^{me+d}.
\ee
See also \cite{Zub25-2} for the application of the homeomorphism to $p$-adic polynomial regression.

\vs
The composite of the homeomorphism and trie tree algorithm for $\Zp$, which we call {\it $p$-adic kd-trie algorithm} is a $p$-adic analogue of kd-tree algorithm. Similar to the original kd-tree algorithm for real vectors, the $p$-adic kd-trie algorithm for $p$-adic vectors provides an effective way to handle a set/multiset of $p$-adic integers and to answer various queries on it. For example, a nearest neighbour searching query is rapidly solved with a much easier implementation than the real one, because the non-Archimedean property makes branch-and-bound simpler. Namely, if we want to compute the maximum of the $p$-adic valuation of the difference between a $p$-adic vector $v$ and points in a given set $S$ of $p$-adic vectors, it suffices to simply check how many times moves from a node to a child node corresponding to the strings associated to $v$ can be done without adding a new node to the trie tree constructed from $S$. Here is a pseudocode of the process:

\begin{figure}[H]
\begin{algorithm}[H]
\caption{Multidimensional $p$-adic expansion}
\label{multidimensional p-adic expansion}
\begin{algorithmic}[1]
\Function {Expand}{$p,E,D,\vec{x}$}
	\State $a \gets$ an empty array
	\ForAll {$e \in \N_{< E}$}
		\ForAll {$d \in \N_{< D}$}
			\State Append $x_d \bmod p$ to $a$ \Comment{$\vec{x} = (x_d)_{d=0}^{D-1}$}
			\State $x_d \gets \lfloor p^{-1}x_d \rfloor$
		\EndFor
	\EndFor
	\State \Return $a$
\EndFunction
\end{algorithmic}
\end{algorithm}
\end{figure}

\begin{figure}[H]
\begin{algorithm}[H]
\caption{Construction of $p$-adic kd-trie}
\label{kd-trie}
\begin{algorithmic}[1]
\Function {KDTrie}{$p,E,D,S$}
	\State $T \gets$ a rooted tree with $V_T = \ens{\ast_T}$ \Comment{variable for a trie tree whose edges are labeled by $\N_{< p}$}
	\ForAll {$\vec{x} \in S$}
		\State $a = (a_i)_{i=0}^{ED-1} \gets$ \Call{Expand}{$p,E,D,\vec{x}$}
		\State $n \gets \ast_T$
		\ForAll {$i \in \N_{< ED}$}
			\If {There is no directed edge of label $a_i$ from $n$}
				\State Append to $T$ a new directed edge of label $a_i$ from $n$ to a new node
			\EndIf
			\State $n \gets$ the unique node to which there is a directed edge of label $a_i$ from $n$
		\EndFor
	\EndFor
	\State \Return $T$
\EndFunction
\end{algorithmic}
\end{algorithm}
\end{figure}

Here, for a tree $T$, we denote by $\ast_T$ the root of $T$, and by $V_T$ the set of vertices of $T$.

\begin{figure}[H]
\begin{algorithm}[H]
\caption{Searching the nearest neighbour of $\vec{x}$ in $S$ implemented as a $p$-adic kd-trie $T$}
\label{nearest neighbour searching}
\begin{algorithmic}[1]
\Function {NearestNeighbourSearching}{$p,E,D,T,\vec{x}$}
	\State $a = (a_i)_{i=0}^{ED-1} \gets$ \Call{Expand}{$p,E,D,\vec{x}$}
	\State $n \gets \ast_T$
	\ForAll {$i \in \N_{< ED}$}
		\If {There is no directed edge of label $a_i$ from $n$}
			\State \Return $\lfloor D^{-1}i \rfloor$
		\EndIf
		\State $n \gets$ the unique node to which there is a directed edge of label $a_i$ from $n$
	\EndFor
	\State \Return $\infty$
\EndFunction
\end{algorithmic}
\end{algorithm}
\end{figure}

We note that the return value of Algorithm \ref{nearest neighbour searching} is the maximum of the $p$-adic valuation rather than the minimum of the $p$-adic distance for convenience. Practically, it is good to implement $\infty$ as $E$ because other return values of Algorithm \ref{nearest neighbour searching} is smaller than $E$.

\section{$p$-adic Estimation of Zero Locus}
\label{p-adic Estimation of Zero Locus}

Let $D \in \N_{> 0}$, $Y$ a closed subset of $\Zp^D$, $I$ a finite set, and $\vec{y} = (y_i)_{i \in I} \in Y^I$ a sequence of sample points in $Y$. We estimate $Y$ by estimating $(B(Y,p^{-e}))_{e=0}^{E}$ from the sample data $\vec{y}$.

\subsection{$p$-adic Interpolation of Zero Locus}
\label{p-adic Interpolation of Zero Locus}

If $\vec{y}$ is biased in the sense that it is not taken generically in $Y$, we cannot estimate $Y$ from $\vec{y}$. If we allowed $I$ to be an infinite set, then it would be good to assume that the image $\im \vec{y} \coloneqq \set{y_i}{i \in I} \subset Y$ of $\vec{y}$ is dense in $Y$. Instead, we introduce the notion of $E$-denseness as a finite variant of denseness.

\vs
Let $E \in \N$. We say that $\vec{y}$ is {\it $E$-dense} in $Y$ if for any $y \in Y$, there exists an $i \in I$ such that $\| y - y_i \| \leq p^{-E}$. Under the assumption, for any $e \in \N_{leq E}$, we have
\be
B(Y,p^{-e}) = B(\im \vec{y},p^{-e}).
\ee
Estimation of $B(Y,p^{-e})$ for each $e \in \N_{\leq E}$ gives estimation of the defining function $F_Y$ modulo $p^E$.

\vs
Let $x \in X$. If $F_Y(x) \not\equiv 0 \pmod{p^E}$, then we have $F_Y(x) \neq 0$, and hence $x \notin Y$. On the other hand, if $F_Y(x) \equiv 0 \pmod{p^E}$, both cases are possible: $F_Y(x) = 0$ and $F_Y(x) \in p^E \Zp \setminus \ens{0}$. Therefore, we can use $F_Y$ modulo $p^E$ as estimation of how likely $x \in Y$ holds. As $E$ grows bigger, then the estimation becomes more accurate. On the other hand, if there can be $p$-adically small noise on the sample data $\vec{y}$, it is reasonable to keep $E$ smaller than or equal to the maximum of the $p$-adic valuation of the noise in order to avoid overfitting.

\vs
Suppose $X = \Zp^D$ for a $D \in \N$. The estimation of $F_Y$ modulo $p^E$ requires the data of the characteristic functions of $(B(Y,p^{-e})_{e=0}^{E} = (B(\im \vec{y},p^{-e}))_{e=0}^{E}$, which can be computed by Algorithm \ref{nearest neighbour searching} applied to $(p,E,D,T)$ for the $p$-adic kd-trie $T$ of $\im \vec{y}$ computed by Algorithm \ref{kd-trie} applied to $(p,E,D,\im \vec{y})$.

\vs
However, the assumption that $\vec{y}$ is $E$-dense in $Y$ is too strong when $p^{ED} \#(Y \cap \N_{< R}^D)/ \#(X \cap \N_{< R}^D)$ is large for a fixed upperbound $R \in \N$ of entries, because a lowerbound of $\# I$ is roughly estimated as $p^{ED} \#(Y \cap \N_{< R}^D) / \#(X \cap \N_{< R}^D)$. Especially when $Y$ is a subvariety of codimension $1$ (and hence $D > 0$), $\#(Y \cap \N_{< R}^D) / \# (X \cap \N_{< R}^D)$ is roughly estimated as $p^{-1}$, and $\# I$ should be greater than $p^{ED-1}$.

\vs
In order to solve this data sparseness issue, we will apply $p$-adic interpolation method in the next subsection.

\subsection{$p$-adic Finite Rank Approximation of Defining Function}
\label{p-adic Finite Rank Approximation of Defining Function}

We continue to consider the case $X = \Zp^D$ for a $D \in \N$. When $D = 0$, then the $p$-adic manifold learning for $(X,Y)$ is trivial. Therefore, we assume $D > 0$ in this subsection. When $\# I$ is much smaller than $p^{ED-1}$, the estimation of $F_Y$ modulo $p^E$ might overfit the sample data. Therefore, it is reasonable to apply ``finite rank approximation'' to the estimation of $F_Y$ modulo $p^E$ in order to remove the effect of overfitting the sparse data.

\vs
The Banach $\Qp$-vector space $\rC(\Zp^D,\Qp)$ of continuous functions $\Zp^D \to \Qp$ is of countable type. A finite rank approximation $f'$ of a continuous function $f \colon \Zp^D \to \Qp$ in this context is given by truncating an orthonormal Schauder basis, i.e.\ fixing an orthonormal Schauder basis $(F_{\ell})_{\ell \in \N}$ of $\rC(\Zp^D,\Qp)$ and an $L \in \N$, presenting $f$ as $\sum_{\ell \in \N} c_{\ell} F_{\ell}$ for a unique $(c_{\ell})_{\ell \in \N} \in \Qp^{\N}$ converging to $0$, and setting $f' \coloneqq \sum_{\ell=0}^{L-1} c_{\ell} F_{\ell}$.

\vs
Properties of the finite rank approximation $f \mapsto f'$ heavily depend on the choice of $(F_{\ell})_{\ell \in \N}$ and $L$, and hence we need to select them appropriately by considering what properties we desire for the finite rank approximation. When we expect that $f$ is Lipschitz continuous with an explicit bound of Lipschitz constant, then it is good to choose $(F_{\ell})_{\ell \in \N}$ as an orthonormal Schauder basis consisting of characteristic functions of closed balls whose radii are ordered in the descending order, e.g.\ van der Put basis.

\vs
On the other hand, when we expect that $f$ can be effectively interpolated by values at vectors with small natural number entries, then it is good to choose $(F_{\ell})_{\ell \in \N}$ as the higher dimensional Mahler basis linearly ordered in some way.

\vs
Suppose that we know $Y \cap (\N_{< M})^D$ for an $M \in \N$ and $\vec{y}$ is taken so that $\im \vec{y} = Y \cap (\N_{< M})^D$. Then the estimation of $F_Y$ modulo $p^E$ by $\vec{y}$ is accurate on $\N_{< M}^D$, and Algorithm \ref{Mahler} is applicable to $(p,E,D,M^D,\vec{f})$, where $\vec{f}$ denotes the multi-dimensional array of values of $F_Y$ modulo $p^E$ on $\N_{< M}^D$. The resulting estimation of the $D$-dimensional Mahler expansion of $F_Y$ modulo $p^E$ can be used as a finite rank approximation of $F_Y$. Here is a pseudocode of the estimation:

\begin{figure}[H]
\begin{algorithm}[H]
\caption{Estimation of the $D$-dimensional Mahler expansion of $F_Y$ modulo $p^E$ computed from the sample data $\vec{y}$}
\label{defining function}
\begin{algorithmic}[1]
\Function {MahlerManifoldLearning}{$p,E,D,L,I,\vec{y}$}
	\State $T \gets$ \Call{KDTrie}{$p,E,D,\im \vec{y}$}
	\State $\vec{f} = (f_{\vec{x}})_{\vec{x} \in \N_{< M}^D} \gets (0)_{\vec{x} \in \N_{< M}^D}$
	\ForAll {$\vec{x} \in \N_{< M}^D$}
		\State $v \gets$ \Call{NearestNeighbourSearching}{$p,E,D,T,\vec{x}$}
		\State $f_{\vec{x}} \gets p^v$
	\EndFor
	\State \Return \Call{Mahler}{$p,E,D,M^D,\vec{f}$}
\EndFunction
\end{algorithmic}
\end{algorithm}
\end{figure}

\subsection{Experiment by Nim Benchmark}
\label{Experiment by Nim Benchmark}

The set of winning positions of $D$-heap nim is a closed subset of $\N^D$ with respect to the relative topology of $\Zp^D$ for the case $p = 2$, as its Grundy number is defined by the $D$-ary nimber sum $\N^D \to \N$, which is $2$-adically continuous. Since we already know how to compute the Grundy number of a given position of $D$-heap nim, we can demonstrate Algorithm \ref{defining function} to check the performance by applying it to the set of winning positions of $D$-heap nim with piles smaller than $M$. If there is another impartial game whose winning positions are characterised by a $p$-adically continuous function, then it will also provide similar benchmark tasks.

\vs
We experimented in two settings for the case $(D,E,M) = (3,10,100)$:
\bi
\item[(1)] Estimate whether $\vec{x}$ is a winning position or not for random $\vec{x}$ chosen from $\N_{<2^E}^D = \N_{<1024}^3$.
\item[(2)] Estimate whether $\vec{x}$ is a winning position or not for all $\vec{x}$ in $\ens{0} \times \N_{<2^E}^{D-1} = \ens{0} \times \N_{<1024}^2$. 
\item[(3)] Estimate whether $\vec{x}$ is a winning position or not for random winning position $\vec{x}$ chosen from $\N_{<2^E}^D = \N_{<1024}^3$.
\item[(4)] Estimate whether $\vec{x}$ is a winning position or not for all winning position $\vec{x}$ in $\N_{< 64} \times \N_{<2^E}^{D-1} = \N_{< 64} \times \N_{<1024}^2$. 
\ei
Here, we detect that $\vec{x}$ is a winning position if and only if the estimation of $F_Y(\vec{x})$ modulo $p^E$ is non-zero.

\vs
We note that the trivial detection, i.e.\ the method returning ``true'' for all input achieves $99.90 \%$ success for (1) and (2) because there is only $2^E = 1024$ losing positions in $\N_{< 2^E}^D = \N_{< 1024}^3$. Of course, it fails for all trials for (3) and (4). Therefore, performance of a model to solve $p$-adic manifold learning should be evaluated by the following two factors
\bi
\item[(i)] The success probability for positions including both winning positions and losing positions with the rate around $1023 : 1$.
\item[(ii)] The success probability for winning positions.
\ei
That is why we consider the tasks (1) -- (4) as a benchmark. The result for (i) should not be too smaller than the performance $99.90 \%$ of the trivial detection. The result for (ii) should not be too smaller than $10^{-1}$, because if too smaller, then any reasonable repetition of the method (with randomised parameters or changed hyper-parameters) does not achieve a high probability.

\vs
Here are the results of Algorithm \ref{defining function} for the tasks (1) -- (4).
\bi
\item[(1)] We observed $200$ failures in $10^5$ trials. The the $200$-th failure was observed at the $99668$-th trial, and the $201$-st failure was observed at the $100118$-th trial. Therefore, the success probability is estimated as $99.80 \%$.
\item[(2)] We observed $2112$ failures in $(2^E)^2 = 1048576$ trials. Therefore, the success probability is estimated as $99.79 \%$.
\item[(3)] We observed $43987$ failures in $5 \times 10^4$ trials. The the $43987$-th failure was observed at the $50000$-th trial, and the $43988$-th failure was observed at the $50001$-st trial. Therefore, the success probability is estimated as $12.02 \%$.
\item[(4)] We observed $48672$ failures in $64 \times 2^E = 65536$ trials. Therefore, the success probability is estimated as $25.73 \%$.
\ei
Thus, we achieved a high performance of the detection of a winning strategy for all tasks.

\vs
We note that the expected lowerbound
\be
p^{ED}  \times \frac{\# \left( Y \cap \N_{< 2^E}^D \right)}{\# \left( X \cap \N_{< 2^E}^D \right)} = 2^{ED} 2^{-E} = 2^{E(D-1)} = 2^{20} = 1048576
\ee
of $\# I$ for the success of the simple estimation in \S \ref{p-adic Interpolation of Zero Locus} is much larger than the actual value $7984$ of $\# I$ for the case $(D,M) = (3,100)$. Therefore, the success of the estimation implies that the $p$-adic interpolation effectively prevented overfitting the sparse data.

\section{Profinite Abelian Group Extension}
\label{Profinite Abelian Group Extension}

We explain extension of Algorithm \ref{defining function} to a profinite Abelian group using a $p$-adic analogue of Fourier transform. For this purpose, we explain the surprising fact that Mahler expansion can be viewed as a $p$-adic analogue of Fourier series expansion. This section includes arguments on formal scheme, and requires knowledge on arithmetic geometry.

\subsection{$p$-adic Fourier Series Expansion}
\label{p-adic Fourier Series Expansion}

We explain how to regard Mahler expansion as a $p$-adic analogue of Fourier series expansion. Let $\hGm$ denote the formal completion of the algebraic group $\Gm$ of the unit group over $\Zp$. It is an affine formal group scheme of the continuous character of $\Zp$.

\vs
Iwasawa isomorphism gives a canonical Hopf monoid isomorphism between its coordinate ring $\rH^0(\hGm,\cO_{\hGm}) \cong \Zp[[T-1]]$, where the comultiplication of $\Zp[[T-1]]$ is given by $T \mapsto T \otimes T$, and the Iwasawa algebra $\Zp[[\Zp]]$, and the image of the canonical multiplicative embedding $\Zp \hookrightarrow \Zp[[\Zp]]$ coincides with the group of characters of $\hGm$. In this sense, $(\Zp,\hGm)$ satisfies a $p$-adic analogue of Pontryagin duality, and Iwasawa isomorphism gives an interpretation of functions on $\hGm$ into topological linear combinations of characters of $\Zp$.

\vs
Amice transform gives a canonical isomorphism between the Schikhof dual of $\rC(\Zp,\Qp)$ and $\Zp[[\Zp]]$. The composite of Iwasawa isomorphism and Amice transform gives an isomorphism between the Schikhof dual of $\rC(\Zp,\Qp)$ and $\rH^0(\hGm,\cO_{\hGm})$, and the result is extended to a $p$-adic analogue of Plancherel's theorem (cf.\ \cite{Mih21} Theorem 3.23 and Theorem 3.24). In this sense, the composite is a $p$-adic analogue of Fourier series expansion.

\vs
In addition, the closed unit ball of the Schikhof dual of $\rH^0(\hGm,\cO_{\hGm}) \cong \Zp[[T-1]]$ is canonically isomorphic to the Banach $\Qp$-algebra $\rC_0(\N,\Qp)$ with the multiplication dual to the comultiplication of the Hopf object structure on $\Zp^{\N}$ identified with $\Zp[[T-1]] \cong \rH^0(\hGm,\cO_{\hGm})$ by the topological $\Zp$-linear basis $((T-1)^{\ell})_{\ell=0}^{\infty}$.

\vs
Therefore, the Schikhof dual of the $p$-adic Fourier series expansion gives a Banach $\Qp$ algebra isomorphism $\rC(\Zp,\Qp) \to \rC_0(\N,\Qp)$. This isomorphism assigns to each $f \in \rC(\Zp,\Qp)$ the sequence $(\int f d \mu_i)_{i=0}^{\infty}$, where $\mu_i \in \Zp[[\Zp]]$ is the $p$-adic measure corresponding to $(T-1)^i \in \Zp[[T-1]] \cong \rH^0(\hGm,\cO_{\hGm})$ through Iwasawa isomorphism for any $i \in \N$.

\vs
Let $i \in \N$. Since $\mu_i$ coincides with $([1] - [0])^i$ as an element of the group algebra $\Zp[\Zp] \subset \Zp[[\Zp]]$, we have
\be
\int f d \mu_i = \int f d ([1] - [0])^i = \int f d \sum_{j=0}^{i} (-1)^{i-j} \binom{i}{j} [j] = \sum_{j=0}^{i} (-1)^{i-j} \binom{i}{j} f(j).
\ee
Since the right hand side coincides with the $j$-th coefficient in Mahler's expansion of $f$, we conclude that Mahler expansion is also a $p$-adic analogue of Fourier series expansion.

\subsection{$p$-adic Fourier Transformation}
\label{p-adic Fourier Transformation}

As we have explained in the last subsection, the isomorphism between the Schikhof dual of $\rC(\Zp,\Qp)$ and $\rH^0(\hGm,\cO_{\hGm})$ given as the composite of Iwasawa isomorphism and Amice transform is extended to a $p$-adic analogue of Plancherel's theorem, which should be regarded as a $p$-adic Fourier transformation.

\vs
More precisely, let $G$ be a profinite Abelian group. We denote by $\hat{G}$ the formal group scheme of characters of $G$. The $p$-adic analogue of Plancherel's theorem (cf.\ \cite{Mih21} Theorem 3.23) implies that there is a natural isomorphism between the Schikhof dual of $\rC(G,\Qp)$ and $\rH^0(\hat{G},\cO_{\hat{G}})$. By \cite{SGA3-1} Expose VIIB 0.3.8.\ Corollaire, the compact Hausdorff linear topological $\Zp$-module admits an $\Zp$-linear homeomorphism to a direct product of copies of $\Zp$. Choosing such a homeomorphism is equivalent to choosing a orthonormal Schauder basis of $\rC(G,\Qp)$ by Schikhof duality.

\vs
For the case $G = \Zp$, we have chosen the homeomorphism $\rH^0(\hGm,\cO_{\hGm}) \cong \Zp[[x-1]] \cong \Zp^{\N}$ by the expansion by $x-1$. Similarly, the profinite group structure of $G$ sometimes gives an explicit choice of a homeomorphism. For example, if $\hat{G}$ admits a natural structure as a subgroup of a general linear group over $\Qp$, the coordinate centred at the unit might give a homeomorphism. If we fix one homeomorphism, then the corresponding orthonormal Schauder basis, which is a generalisation of Mahler basis, works for finite rank approximation.

\vs
In this way, arguments in \S \ref{p-adic Finite Rank Approximation of Defining Function} extends to $G$, and Algorithm \ref{defining function} extends to the $p$-adic manifold learning for $(G,Y)$ for a $\CO_{\delta}$ set $Y$ of $G$, as long as we fix an implementation of a data structure for a set of elements of $G$ like Algorithm \ref{kd-trie}.

% \newpage
\vspace{0.3in}
\addcontentsline{toc}{section}{Acknowledgements}
\noindent {\Large \bf Acknowledgements}
\vspace{0.2in}

\noindent
I thank all people who helped me to learn mathematics and programming. I also thank my family.

%\vspace{0.3in}
%\addcontentsline{toc}{section}{Compliance with Ethical Standards}
%\noindent {\Large \bf Compliance with Ethical Standards}
%\vspace{0.2in}
%
%\begin{description}
%\item[Funding:] We declare that there exists no funding supporting us.
%\item[Conflict of Interest:] We declare that there exists no conflict of interest.
%\item[Author Contribution:] We declare that we substantially contributed to the entire study, including the conception and design of the study, acquisition of data if applicable, analysis and interpretation of data if applicable, drafting and revising the article critically for important intellectual content, and final approval of the version to be submitted.
%\item[Data Availability Statements:] We declare that we analysed no existing data set because our research proceeds a theoretic and mathematical approach. We simply input random numbers generated by ``randint'' method of ``random'' module in python standard library during the execution of an experiment code.
%\item[Duplicated Submission:] We declare that the manuscript, including related data if applicable, figures if applicable, and tables if applicable, is not and will not be under active consideration elsewhere for the duration of the review process, and has never been published in a journal or presented in a poster session. 
%\end{description}

\end{document}